\documentclass[journal, twocolumn]{IEEEtran}

\usepackage{graphicx}

\usepackage{amsmath}
\usepackage{amssymb}
\usepackage{amsfonts}
\usepackage{amsthm}
\usepackage[switch]{lineno} 

\usepackage[compact]{titlesec}
\newtheorem{theorem}{Theorem}
\newtheorem{problem}{Problem}

\usepackage{etoolbox}
\makeatletter
\patchcmd{\@makecaption}
  {\scshape}
  {}
  {}
  {}
\makeatother

\usepackage{array}

\usepackage[table]{xcolor}
\definecolor{my_gray}{HTML}{DBDBDB}
\usepackage{booktabs, makecell, multirow}
\usepackage{makecell}

\newcommand{\bsC}{\mathbf{C}}

\newcommand{\bsD}{\mathbf{D}}
\newcommand{\bsJ}{\mathbf{J}}
\newcommand{\bsV}{\mathbf{V}}
\newcommand{\bsy}{\mathbf{y}}

\newcommand{\bsg}{\mathbf{g}}
\newcommand{\bsx}{\mathbf{x}}
\newcommand{\bsv}{\mathbf{v}}
\newcommand{\bsz}{\mathbf{z}}

\newcommand{\bsW}{\mathbf{W}}
\newcommand{\bsm}{\mathbf{m}}

\newcommand{\bsmu}{\boldsymbol{\mu}}
\newcommand{\bsd}{\mathbf{d}}
\newcommand{\bstheta}{\boldsymbol{\theta}}
\newcommand{\bsTheta}{\boldsymbol{\Theta}}

\newcommand{\bsSigma}{\boldsymbol{\Sigma}}

\DeclareMathOperator*{\argmin}{argmin}

\begin{document}

\title{Latent Score-Based Bayesian Cram{\'e}r--Rao Bound Estimation for High-Dimensional Imaging Systems}

\author{Evan~Scope~Crafts, Thomas Wynn, Seonyeong Park, Mark Anastasio, and Umberto~Villa%

\thanks{This work was supported by the National Institutes of Health under grants EB031585 and EB034261. The authors acknowledge the Texas Advanced Computing Center (TACC) at The University of Texas at Austin for providing computational resources on the Vista system.
}
\thanks{E. Scope Crafts is with the Oden Institute for Computational Engineering and Sciences, the University of Texas at Austin, Austin, TX, 78712.}%
\thanks{T. Wynn is with the Graduate School of Biomedical Sciences at UT Southwestern Medical Center.}
\thanks{S. Park and M. Anastasio are with the Mallinckrodt Institute of Radiology at Washington University in St. Louis.}
\thanks{U. Villa is with the Department of Biomedical Engineering and the Oden Institute for Computational Engineering and Sciences, the University of Texas at Austin, Austin, TX, 78712.}}

\maketitle

\begin{abstract}
We propose a data-driven framework for estimating the Bayesian Cram\'er--Rao bound (CRB) in high-dimensional imaging systems with complex, analytically intractable priors. Direct CRB computation is challenging in this setting due to the need to model the prior score and to form and invert the Bayesian Fisher information matrix in very high dimensions. To address these issues, we first reformulate the inverse problem in the latent space of a pre-trained variational autoencoder, thereby dramatically reducing the dimensionality of the bound estimation problem while preserving the spatial structure of the images. The Bayesian CRB is formed in this latent space and mapped back to the native parameter space using a change-of-variables formula. Second, to learn the latent prior score, we introduce a new sliced score matching objective defined in a Bochner space endowed with an $H^1(\Omega)$ Sobolev norm in space. This ``Bochner-space sliced score matching'' objective is consistent with standard sliced score matching, but penalizes errors in the spatial gradients of the score, suppressing high-frequency artifacts that otherwise contaminate the resulting CRB estimates. We validate the approach on a stylized quantitative photoacoustic computed tomography (qPACT) breast imaging problem with over one million unknown parameters, using a foundation-model autoencoder derived from Stable Diffusion. The proposed method yields stable, artifact-free Bayesian CRB estimates that reflect the highly non-Gaussian structure of the learned prior and reveal the substantial impact of the prior on the relative performance of competing qPACT design schemes.
\end{abstract}
\section{Introduction}

The Bayesian Cram\'{e}r--Rao bound (Bayesian CRB) \cite{bell_2013_detection, trees_2007_bayesian} provides a lower bound on the mean squared error of any estimator for a Bayesian inverse problem. This property makes it a principled benchmark for estimator performance~\cite{Tichavsky1998} and enables its use as a criterion for imaging system design~\cite{akccakaya2015selection, crafts2024bayesian}. 
Crucially, the Bayesian CRB can be computed without 
solving the underlying inverse problem and, thus, does not depend on the choice of estimator. 
Consequently, Bayesian CRB-based design methods are computationally more efficient than approaches 
that require repeatedly solving the inverse problem to evaluate candidate designs. 

There are two key challenges in 
forming the Bayesian CRB for imaging systems. First, the Bayesian CRB depends on the scores, defined as 
the gradients of the log-densities, 
of both the prior distribution and the likelihood function. In imaging systems, high-fidelity models of the likelihood function can often be derived from 
domain-specific knowledge of 
the physics underlying 
the imaging modality. In contrast, 
the prior distribution is typically unknown and difficult to model accurately. Second, the Bayesian CRB is canonically defined as the inverse of the Bayesian Fisher information matrix, a square matrix whose 
number of rows and columns equals the dimension of the parameter space 
of the underlying inverse problem. As imaging systems often involve 
unknown parameters 
in high-dimensional, or even infinite-dimensional, parameter spaces, computing the inverse of the Bayesian Fisher information matrix 
can be prohibitively expensive, and storing the matrix in memory may be infeasible. 

\subsection{Bayesian CRB Estimation}

In recent years, there has been significant work to address the first challenge described above by developing data-driven Bayesian CRB estimators for settings where the prior distribution and/or 
the likelihood function are unknown. For example, in \cite{crafts2023}, a Bayesian CRB estimator was developed that uses 
score matching \cite{hyvarinen_2005_estimation}, a parametric 
technique for estimating the score of a data distribution from samples, to model the score of the prior distribution. The estimated prior score and the known likelihood function are then used to construct a Monte Carlo estimator of the Bayesian CRB. In \cite{habi2025learned}, this approach was extended to 
settings where the likelihood is either fully unknown 
or has a known 
forward model but unknown 
measurement noise statistics. In a different vein, Duy et al. \cite{duy_fisher_2023} proposed an approach for directly estimating the entries of the Bayesian Fisher information matrix 
by exploiting a relationship between 
Fisher information and $f$-divergence measures \cite{berisha_empirical_2015}. 

The above approaches do not readily 
scale to high-dimensional settings. Specifically, the score matching objective used to learn the prior score in \cite{crafts2023} and \cite{habi2025learned} requires evaluating the trace of the score model Jacobian, which has been shown to be computationally expensive for 
high-dimensional neural-network-based score models \cite{martens2010deep}. As a consequence, these approaches have 
been validated only on example problems with parameter dimensions of twenty or fewer. Further, these approaches do not address the difficulties associated with forming and inverting the Bayesian Fisher information matrix 
that arise in very high-dimensional problem settings. The approach proposed by 
Duy et al.~\cite{duy_fisher_2023} is even more limited in its capacity to address high-dimensional problems, as it requires training $D (D + 1)/2$ separate neural networks, where $D$ denotes 
the parameter dimension. 

Another approach for estimating the Bayesian CRB 
was recently introduced by the authors in \cite{crafts2024bayesian}. This approach, unlike the score-matching and neural network approaches described above, was demonstrated to scale to high-dimensional problems. 
However, a key ingredient in achieving this scalability is the use of a log-Gaussian prior with Mat{\'e}rn covariance \cite{lindgren2011explicit} to capture the prior structure. Although this choice of 
prior makes 
the Bayesian Fisher information matrix and its inverse tractable to compute in 
high-dimensional Bayesian inverse problems, it cannot capture 
the intricate structure of image distributions. 

\subsection{Our Contributions}

In this work, we introduce a novel approach to 
Bayesian CRB estimation that 
addresses two 
challenges commonly encountered in imaging system design: 
high-dimensional parameter spaces and complex, analytically 
unknown prior distributions. 
This approach relies on two key innovations. First, to reduce the computational burden of Bayesian CRB estimation in high-dimensional inverse problems, 
we reformulate the inverse problem in the latent space of an autoencoder. This technique has been widely explored for 
solving inverse problems \cite{bora2017compressed, asim2020invertible}, but, to the best of our 
knowledge, it has not been employed for experimental design. We exploit this 
latent-space formulation by computing 
the Bayesian CRB in the latent space and transforming it to 
the original parameter space using 
the change-of-variables formula \cite{trees_2007_bayesian}. 

The latent-space formulation alleviates the computational difficulties associated with forming and inverting the Bayesian Fisher information matrix 
in high-dimensional settings and 
makes estimation of the prior score more tractable. However, the dimension of the autoencoder latent space can remain 
relatively large, exceeding 
$1000$ in many applications. 
The computational cost of evaluating 
the score matching (SM) objective in such spaces 
can be mitigated through the use of sliced score matching (SSM) \cite{song_2020_sliced}, an SM variant that uses Hutchinson's trace estimator \cite{hutchinson1989stochastic} to approximate 
the Jacobian trace term in the original SM objective. However, in our numerical experiments, we found that directly applying SSM in the latent space can introduce 
artifacts in the learned score that degrade 
the resulting Bayesian CRB estimates. 

Our second contribution is therefore a novel variant of the SSM objective, referred to here as \emph{sliced score matching in Bochner spaces}, designed to mitigate these artifacts. 
The proposed SM objective is motivated by the observation that the latent space of autoencoders generally retains much of the spatial structure of the original parameter space. 
For parameter spaces with spatial structure, both SM and SSM can be interpreted 
as spatial discretizations of a loss function in a \emph{Bochner space}. Bochner spaces, which generalize 
scalar-valued $L^p$ function spaces to functions taking values in a Banach space 
\cite{hytonen2016analysis}, require the choice of 
a norm for 
the codomain space. The 
standard SM 
and SSM objectives correspond to Bochner spaces with an $L^2$ codomain norm. The proposed SSM objective essentially replaces the $L^2$ norm in the above formulations with an $H^1$ Sobolev 
norm, which penalizes errors in the spatial gradient of the function \cite{czarnecki2017sobolev}. The corresponding discretized objective, like the original SSM objective, yields consistent estimators of the score. However, compared to the original SSM loss, the $H^1$-weighted variant strongly reduces high-frequency components 
of the score error that can appear as artifacts in the resulting bound estimates. 

We evaluate 
the proposed estimator on a stylized version of the inverse problem encountered in quantitative photoacoustic computed tomography (qPACT) \cite{cox2009challenges, tarvainen2024quantitative}, an emerging biomedical imaging modality. In particular, we consider the computation of the Bayesian CRB for a qPACT breast imaging system with a parameter space of dimension $1024 \times 1024$ and training data generated from 
the stochastic numerical breast phantoms introduced in \cite{park2023stochastic, park2026virtual}. Here, 
the inverse problem involves estimating 
the optical absorption coefficient from 
single-wavelength qPACT data, assuming 
a known optical scattering coefficient. In this setting, we show that the proposed approach produces 
bound estimates that accurately capture the structure in both the likelihood and the prior distribution. 
We further show that the proposed SSM objective mitigates 
artifacts that appear in the bound estimate 
when 
the original SSM objective is used. 

\subsection{Organization}

The remainder of this paper 
is organized as follows. Section \ref{sec:background} provides relevant background on SM, SSM, 
and the Bayesian CRB. Section \ref{sec:methods} formally introduces the Bayesian CRB estimation problem, together with the proposed estimator. Section \ref{sec:num_stud} provides details of 
the numerical experiments we conducted, while Section \ref{sec:results} contains the results. Concluding remarks are given in Section \ref{sec:discuss_conclude}.

\section{Background}
\label{sec:background}
In this section, we provide relevant background regarding SM 
and the Bayesian CRB.

\subsection{Score Matching}

Score matching is a statistical technique for estimating the score of a given probability distribution from a set of i.i.d. samples $\{\bsm_i \in \mathbb{R}^D\}_{i=1}^N$, where $\bsm_i \sim p(\bsm)$. It estimates the score by 
minimizing the Fisher divergence $L(\bstheta)$ between 
a parametric model of the score (the \textit{score model}) and the ground-truth score: 
\begin{equation}
\label{eq:fisher}
    L(\bstheta) \triangleq \frac{1}{2} \mathbb{E}_{\bsm} \left [ \left \| s_{\bstheta}(\bsm) - \nabla_{\bsm} \log p(\bsm) \right \|_2^2 \right].
\end{equation}
In the above expression, $s_{\bstheta}: 
\mathbb{R}^D \to \mathbb{R}^D$ denotes 
the score model, parameterized by 
$\bstheta \in \bsTheta \subset \mathbb{R}^P$. Note that the score of the ground-truth distribution of interest, $\nabla_{\bsm} \log p(\bsm)$, is unknown. However,
Hyv{\"a}rinen \cite{hyvarinen_2005_estimation} showed that, under mild regularity conditions, the identity $L(\bstheta) = J(\bstheta) + C$ holds, where $C$ is a constant independent of the score model and 
\begin{equation}
    \label{eq:sm}
    J(\bstheta) \triangleq \frac{1}{2} \mathbb{E}_{\bsm} \left [ \left \| s_{\bstheta}(\bsm) \right \|_2^2 + 2 \, \mathrm{tr}\left( \nabla_{\bsm} s_{\bstheta}(\bsm) \right) \right ].
\end{equation}
Here, $\mathrm{tr}(\cdot)$ denotes the trace of the given matrix, and 
$\nabla_{\bsm} s_{\bstheta}(\bsm) \in \mathbb{R}^{D \times D}$ is the Jacobian of the score model. Note that the terms inside 
the expectation in \eqref{eq:sm} depend only on the score model and can therefore be evaluated without knowledge of the ground-truth score. Consequently, the expectation can be approximated using Monte Carlo methods and the available 
i.i.d.\ samples. 

Although 
the above SM formulation 
yields a consistent estimator of the score, its Jacobian term can be computationally expensive to evaluate. 
To reduce this cost, 
Song et al.~introduced SSM \cite{song_2020_sliced}, which applies 
Hutchinson's trace estimator \cite{hutchinson1989stochastic} to the Jacobian term. 
The resulting objective can be written as follows:
\begin{equation}
    \label{eq:ssm}
        J_{\mathrm{sl}}(\bstheta) \triangleq \mathbb{E}_{\bsv} \, \mathbb{E}_{\bsm}  \left [ \frac{1}{2} \left \| s_{\bstheta}(\bsm) \right \|_2^2 + \bsv^T \left [\nabla_{\bsm} s_{\bstheta}(\bsm) \right] \bsv \right],
\end{equation}
where $\bsv$ follows 
a distribution satisfying $\mathbb{E} [ \| \bsv \|_2^2 ] < \infty$ and $\mathbb{E} [ \bsv \bsv^T ] = \mathbf{I}$. 
In this work, we use $\bsv \sim \mathcal{N}(\mathbf{0}, \mathbf{I})$. This objective admits the following unbiased estimator:
\begin{equation}
\label{eq:ssm_empirical}
\begin{split}
    \hat{J}_{\mathrm{sl}}(\bstheta) \triangleq  \frac{1}{N} \sum_{i=1}^N & \left( \frac{1}{2}\left \| s_{\bstheta}(\bsm_i) \right \|_2^2 +\right. \\
    & \left. \frac{1}{M} \sum_{j=1}^M \, \bsv_{ij}^T \left [\nabla_{\bsm} s_{\bstheta}(\bsm_i) \right] \bsv_{ij} \right),
\end{split}
\end{equation}
where 
$M \geq 1$ denotes 
the number of random vectors used to estimate the trace, and the $\bsv_{ij}$ are i.i.d. samples. Note that the projected Jacobian term $\bsv_{ij}^T \left [\nabla_{\bsm} s_{\bstheta}(\bsm_i) \right] \bsv_{ij}$ can be computed efficiently using a single reverse-mode automatic differentiation pass by exploiting the identity 
$ \left [\nabla_{\bsm} s_{\bstheta}(\bsm_i) \right] \bsv_{ij} = \nabla_{\bsm} \left [ s_{\bstheta}(\bsm_i)^T \bsv_{ij} \right]$. Song et al. proved that, under 
Assumptions 1--8 in \cite{song_2020_sliced}, 
the score estimator obtained from \eqref{eq:ssm_empirical} is 
consistent 
for any $M \geq 1$. 

\subsection{The Bayesian CRB}

The Bayesian CRB is an information-theoretic lower bound on the error of \emph{any} estimator under mild regularity conditions. It is widely used both to 
benchmark the performance of estimators and to 
guide system design, with scalar invariants of the bound serving as metrics for 
the analysis and optimization of signal processing systems. For an inverse problem with unknown parameter $\bsm \in \mathbb{R}^D$ and measurement (observation) $\bsy \in \mathbb{R}^K$, the bound can be written as follows \cite{bell_2013_detection}:
\begin{equation}
    \mathbb{E}_{\bsm, \bsy} \left [ (\hat{\bsm}(\bsy) - \bsm)  (\hat{\bsm}(\bsy) - \bsm)^T \right ] \succeq \bsV_{\bsm}, \quad \bsV_{\bsm} \triangleq \bsJ_{\bsm}^{-1}. 
\end{equation}
Here, $\hat{\bsm}: 
\mathbb{R}^K \to \mathbb{R}^D$ denotes 
an estimator, $\bsV_{\bsm} \in \mathbb{R}^{D \times D}$ is the Bayesian CRB for the parameter $\bsm$, and 
$\mathbf{A} \succeq \mathbf{B}$ indicates 
that $\mathbf{A} - \mathbf{B}$ is 
positive semi-definite. 
The matrix $\bsJ_{\bsm} \in \mathbb{R}^{D \times D}$ is the corresponding \emph{Bayesian Fisher information matrix}, defined as 
$$
\bsJ_{\bsm} \triangleq \mathbb{E}_{\bsm, \bsy} \left [ \nabla_{\bsm} \log \pi_{\mathrm{jnt}} (\bsm, \bsy) \, \nabla_{\bsm} \log \pi_{\mathrm{jnt}}(\bsm, \bsy)^T \right],
$$
where $\pi_{\mathrm{jnt}}$ is the joint density of $\bsm$ and $\bsy$, which can be written as
$$
\pi_{\mathrm{jnt}}(\bsm, \bsy) \triangleq \pi_{\mathrm{like}}(\bsy \mid \bsm) \, \pi_{\mathrm{pr}}(\bsm). 
$$
Here, $\pi_\mathrm{like}(\bsy\mid\bsm)$ denotes the likelihood function and $\pi_\mathrm{pr}(\bsm)$ denotes the prior density.

The Bayesian Fisher information matrix admits the following decomposition: 
\begin{equation}
\bsJ_{\bsm } = \bsJ^P_{\bsm} + \bsJ^D_{\bsm}.
\label{eq:Jb_exact}
\end{equation}
Here, the \emph{prior term} $\bsJ^P_{\bsm}$ 
is given by
\begin{equation}
\begin{split}
\bsJ^P_{\bsm} & \triangleq  \mathbb{E}_{\bsm} \left [ \nabla_{\bsm} \log \pi_{\mathrm{pr}}(\bsm) \; \nabla_{\bsm} \log \pi_{\mathrm{pr}}(\bsm)^T \right ]\\ {} &=  - \mathbb{E}_{\bsm} \left [  \nabla_{\bsm} [\nabla_{\bsm} \log \pi_{\mathrm{pr}}(\bsm) ] \right ],
\end{split}
\label{eq:Jp}
\end{equation}
where 
the second equality follows from 
integration by parts (see, e.g., \cite{bell_2013_detection}). 
The \emph{data term} $\bsJ_\bsm^D$ can be written as
\begin{equation}
\begin{split}
\bsJ_\bsm^D & \triangleq \mathbb{E}_{\bsm, \bsy} \left [ \nabla_\bsm \log \pi_{\mathrm{like}}(\bsy \, | \, \bsm) \; \nabla_\bsm \log \pi_{\mathrm{like}} (\bsy \, | \, \bsm)^T\right ]\\ {} & =  - \mathbb{E}_{\bsm, \bsy} \left [  \nabla_{\bsm} [\nabla_{\bsm} \log \pi_{\mathrm{like}}(\bsy \mid \bsm) ] \right ]. 
\end{split}
\label{eq:Jd}
\end{equation}
Both the gradient-based and Hessian-based expressions for $\bsJ^P_{\bsm}$ and $\bsJ^D_{\bsm}$ above can be used as the basis for Monte Carlo estimators of the matrices. 
The gradients of the log-prior and log-likelihood are generally cheaper to evaluate than the corresponding Hessians. 
However, the Hessian-based expressions may be advantageous 
when only a small number of Monte Carlo samples is available. 

\section{Methods}
\label{sec:methods}

As discussed in the introduction, we consider 
the computation of the Bayesian CRB for an imaging system whose 
unknown parameter $\bsm$ lies 
in a high-dimensional Euclidean 
space $\mathbb{R}^{D}$. 
We assume 
access to a set of i.i.d. samples from the ground-truth parameter distribution and 
knowledge of the likelihood function for the given inverse problem. This is formalized in the following problem statement. 

\begin{problem} Consider an imaging system with a high-dimensional parameter space. Given the corresponding likelihood function $\pi_{\mathrm{like}} (\bsy \mid \bsm)$ and a set of i.i.d. 
samples $\{ \bsm_i \}_{i=1}^N$ from an unknown prior 
distribution $\pi_{\mathrm{pr}}(\bsm)$, obtain an estimate $\hat{\bsV}_\bsm$ of the Bayesian CRB $\bsV_{\bsm}$. 
\end{problem}

One approach to address 
the above problem would be 
to model the prior score using SM and then estimate 
$\bsJ^D_{\bsm}$ and $\bsJ^P_{\bsm}$ 
in \eqref{eq:Jb_exact} 
using Monte Carlo methods. This approach, investigated 
in \cite{crafts2023}, admits 
theoretical guarantees and has been shown to perform 
well empirically on small-scale problems. However, there are two key issues with applying this approach to bound estimation for high-dimensional imaging systems. First, the high dimensionality of the problem setting complicates the accurate finite-sample estimation 
of $\bsJ^D_{\bsm}$ and $\bsJ^P_{\bsm}$ and the subsequent inversion of 
the Bayesian Fisher information matrix. 
Second, SM 
does not readily scale to high-dimensional systems. 
SSM \cite{song_2020_sliced} can help mitigate the computational cost associated with computing the SM 
objective while retaining consistency of the resulting score estimator. 
However, in practice, issues such as limited model capacity, insufficient 
training data, and 
a poorly behaved loss landscape can lead to artifacts in the score, and thus in the resulting bound estimate, as we 
demonstrate in Section~\ref{sec:results}. 

To address these challenges, 
we introduce two novel techniques to enable 
Bayesian CRB estimation for 
high-dimensional imaging systems. First, rather than forming the bound directly in the native high-dimensional parameter space, we compute 
the bound in the latent space of a variational autoencoder from a pre-trained foundation model. 
We then map the latent-space bound to the native parameter space using 
a change-of-variables formula \cite{bell_2013_detection}. 
Because the latent dimension 
is significantly smaller than the dimension of the native parameter space, this choice makes 
the estimation of the Bayesian Fisher information matrix and computation of its inverse, the Bayesian CRB, more tractable.   

Even with the reduced dimensionality provided by the autoencoder, 
we found in our numerical experiments that the use of SSM to estimate the prior term in the latent space introduced 
artifacts. Our second contribution addresses this issue by introducing a novel variant of the SSM objective. In particular, our approach takes advantage of the fact that the latent space of image autoencoders generally retains much of the spatial structure contained in the native parameter space. Accordingly, 
the standard SM 
and SSM objectives in the latent space can be viewed as discretizations of a loss function defined on 
a Bochner space with an $L^2$ codomain norm. We leverage this interpretation 
to derive the proposed SSM objective 
by replacing the $L^2$ codomain norm 
with an $H^1$ Sobolev norm. 
The resulting objective, like the original SSM objective, yields consistent estimators of the score. However, compared to the original SSM objective, it places a greater emphasis on suppressing 
high-frequency components of the score error that can appear as artifacts in the learned score model and the corresponding bound estimates. 

In what follows, we provide details of the formulation of the bound in the latent space, 
the proposed SM 
objective, and their integration to produce 
bound estimates in both the latent space and the original parameter space.

\subsection{Bound Formation in the Latent Space}

Let $\phi^e$ and $\phi^d$ denote 
the encoder and decoder maps, respectively, of 
an autoencoder. For concreteness, we assume that the autoencoder is a variational autoencoder, with the output of $\phi^e$ being the mean and pointwise standard deviation of a Gaussian distribution 
in the latent space. Accordingly, the 
maps $\phi^e$ and $\phi^d$ 
have the form $\phi^e: \mathbb{R}^D \to \mathcal{Z} \times \mathcal{Z}$ and $\phi^d: \mathcal{Z} \to \mathbb{R}^D$, where $\mathcal{Z}$ is a 
$Z$-dimensional latent space. We further assume that the latent space has spatial structure. Specifically, 
$\mathcal{Z}$ is identified with a vectorization of the space $\mathbb{R}^{\textcolor{blue}{N_c} 
\times Z_1 \times Z_2}$ 
and therefore has dimension $Z = \textcolor{blue}{N_c}
\, Z_1 \, Z_2$, where \textcolor{blue}{$N_c$} 
denotes the number of channels, 
and $Z_1$ and $Z_2$ denote the spatial dimensions. 
In the remainder of this section, we set 
\textcolor{blue}{$N_{c} 
= 1$} for simplicity of exposition. Note that the autoencoder used here can be any pre-trained foundation-model variational autoencoder and 
need not 
be fine-tuned on the 
dataset of interest; see Section \ref{sec:num_stud} for details. 

Given the autoencoder, the inverse problem and corresponding bound estimation problem can be reformulated in the 
latent space of the autoencoder. 
Under this reformulation, the latent-space prior distribution is given by 
\begin{equation}
\label{eq:latent_prior}
\pi_{\mathrm{ltnt}}(\bsz) \triangleq \int_{\mathbb{R}^D} \mathcal{N}\left(\bsz; \phi^e(\bsm)_1, \mathrm{diag}(\phi^e(\bsm)_2^2) \, \right) \; d\pi_{\mathrm{pr}}(\bsm).
\end{equation}
Here, $\bsz\in\mathcal{Z}$ is the latent variable, and $ \phi^e(\bsm)_1$ and $\phi^e(\bsm)_2$ denote 
the mean and standard deviation, 
respectively, of the encoded latent Gaussian corresponding to $\bsm$. 
Above, $\mathrm{diag}(\cdot)$ returns a diagonal matrix whose diagonal entries are given by its argument. 
We emphasize that the latent-space density is unknown because 
$\pi_{\mathrm{pr}}$ is not analytically accessible in our setting; the score of $\pi_{\mathrm{ltnt}}$ must therefore be learned from data. 

To construct the corresponding likelihood, we assume for the sake of concreteness that the image-space likelihood follows an 
additive Gaussian noise model, i.e., $\pi_{\mathrm{like}}(\bsy \mid \bsm) = \mathcal{N}(\bsy; f(\bsm), \bsSigma)$, where $\bsSigma \in \mathbb{R}^{K \times K}$ is the noise covariance matrix 
and $f: \mathbb{R}^D \to \mathbb{R}^K$ is the forward operator. The corresponding latent-space likelihood, denoted by $\pi_{\mathrm{dlike}}$, can then be written as
\begin{equation}
    \label{eq:latent_like}
    \pi_{\mathrm{dlike}}(\bsy \mid \bsz) = \mathcal{N}(\bsy; f(\phi^d(\bsz)), \bsSigma). 
\end{equation}

As detailed in the following subsections, we use the latent-space prior $\pi_{\mathrm{ltnt}}$ and the latent-space likelihood $\pi_{\mathrm{dlike}}$ to construct an estimate $\hat{\bsV}_{\bsz}$ of the Bayesian CRB $\bsV_{\bsz} \in \mathbb{R}^{Z \times Z}$ for the latent-space parameter $\bsz$. The bound estimate can be used directly for optimal experimental design. Alternatively, a bound on the image-space parameter $\bsm$ can be obtained via the change-of-variables formula for the Bayesian CRB \cite{bell_2013_detection} as
\begin{equation}
\begin{split}
    {}&\mathbb{E}_{\bsm, \bsy} \left [ (\hat{\bsm}(\bsy) - \bsm)  (\hat{\bsm}(\bsy) - \bsm)^T \right ] \succeq \bsC \bsV_{\bsz} \bsC^T,\\ {}& \bsC \triangleq \mathbb{E}_{\bsz} \left [\nabla_{\bsz} \phi^d(\bsz) \right].
\end{split}
\end{equation}

In our approach, we construct a Monte Carlo estimator $\hat{\bsC}$ of the matrix $\bsC$: 
\begin{equation}
    \label{eq:cov_formula}
       \hat{\bsC} \triangleq \frac{1}{N_s} \sum_{i=1}^{N_s} \nabla_{\bsz} \phi^d(\bsz_i), \quad \bsz_i \sim \pi_{\mathrm{ltnt}}(\bsz),
\end{equation}
where $N_s$ denotes the number of Monte Carlo samples. 
The Jacobians $\nabla_{\bsz} \phi^d(\bsz_i)$ are computed using forward-mode automatic differentiation to leverage the smaller latent-space dimension 
relative to the native image-space dimension.
The final image-space bound estimate $\hat{\bsV}_\bsm$ is then defined as 
\begin{equation}
    \label{eq:image_space_cov_estimate}
    \hat{\bsV}_\bsm \triangleq \hat{\bsC} \hat{\bsV}_\bsz \hat{\bsC}^T. 
\end{equation}
Note that, while explicitly forming $\hat{\bsV}_\bsm$ may incur 
prohibitive computational 
and memory costs in high-dimensional settings, 
individual entries of $ \hat{\bsV}_\bsm$ can be computed 
efficiently as
\begin{equation*}
    (\hat{\bsV}_{\bsm})_{ij} = \sum_{k=1}^{Z} \hat{\bsC}_{ik} \sum_{l=1}^{Z} (\hat{\bsV}_\bsz)_{kl} \hat{\bsC}_{jl}. 
\end{equation*}

\subsection{Sliced Score Matching in Bochner Spaces}

The Bayesian CRB in the latent space depends on 
the scores of both the latent-space likelihood $\pi_{\mathrm{dlike}}$ and the latent-space prior $\pi_{\mathrm{ltnt}}$. To estimate the latent-space prior score, we introduce a novel variant of SSM. 
SSM is 
based on minimizing the Fisher divergence between 
a score model and the ground-truth score. 
In the latent space, the Fisher divergence objective in \eqref{eq:fisher} can be written as 
\begin{equation}
L(\bstheta) = \frac{1}{2} \mathbb{E}_{\bsz} \left [ \left \| s_{\bstheta}(\bsz) - \nabla_{\bsz} \log \pi_{\mathrm{ltnt}} (\bsz) \right \|_2^2 \right],
\label{eq:latent_fisher}
\end{equation}
where 
$s_{\bstheta}(\mathbf{z})$ 
is the score model and $\nabla_{\bsz} \log \pi_{\mathrm{ltnt}} (\bsz)$ 
is the ground-truth score. Since $\mathcal{Z}$ is assumed to have spatial structure, we can interpret $\bsz$ as the discretization of a function $z(\bsx)$, where 
$\bsx \in \Omega \subset \mathbb{R}^2$.
This interpretation motivates the proposed approach and
can be formalized via Bochner spaces. 

Bochner spaces generalize scalar-valued 
$L^p$ function spaces to functions taking values in a Banach space 
and are widely used in the analysis of partial differential equations (see, e.g., Section 5.9 of \cite{evans2022partial}). Let $H_1$ and $H_2$ denote separable Hilbert spaces, and let $\mu$ be a measure on $H_1$. Then, the Bochner space $L^2(H_1, \mu; H_2)$ is the space of all equivalence classes of Borel-measurable, square-integrable mappings $f: H_1 \rightarrow H_2$ such that
\vspace{-0.6mm}
\begin{equation*}
\| f \|_{L^2(H_1, \mu; H_2)} \triangleq \left({\int_{H_1} 
\| f(z) \|^2_{H_2} \; d\mu(z)}\right)^{1/2} < \infty. 
\end{equation*}
\vspace{-0.6mm}
From the Bochner space perspective, the Fisher divergence objective in \eqref{eq:latent_fisher} can be understood as the
discretization of an objective based on a Bochner-space norm. 
Specifically, the corresponding Bochner space is $L^2( L^2(\Omega), \mu_{\mathrm{ltnt}}; 
L^2(\Omega))$, where $\mu_{\mathrm{ltnt}}$ denotes 
the measure corresponding to $\pi_{\mathrm{ltnt}}$ before discretization. In the proposed method, the $L^2$ codomain norm is replaced with an $H^1$ Sobolev 
norm, leading to the Bochner space $L^2( L^2(\Omega), \mu_{\mathrm{ltnt}}; 
H^1(\Omega))$.

After spatial discretization, 
we obtain the following weighted version of the Fisher divergence objective:
\vspace{-0.5mm}
\begin{equation}
\label{eq:fisher_latent_weighted}
\begin{split}
    {} & L(\bstheta; H^1) \triangleq \frac{1}{2} \mathbb{E}\left [ \left \| s_{\bstheta}(\bsz) - \nabla_{\bsz} \log \pi_{\mathrm{ltnt}} (\bsz) \right \|_\mathbf{W}^2 \right], \\
    {} &\bsW \triangleq \mathbf{I} + \mathbf{L},
\end{split}
\end{equation}
\vspace{-0.5mm}
where $\| \cdot \|_\mathbf{W}$ denotes the weighted norm induced by $\mathbf{W}$, and $\mathbf{L} \in \mathbb{R}^{Z \times Z}$ is a discretization of the two-dimensional Laplacian operator. We implement $\mathbf{L} \bsz$ efficiently using batched fast Fourier transforms; see Section \ref{sec:num_stud} for details. Note that this weighted objective is consistent with the original Fisher divergence objective in the sense that $L(\bstheta; H^1) = 0$ if and only if $L(\bstheta) = 0$. 
However, we observed empirically that using $L(\bstheta)$ can lead to 
high-frequency errors in the learned score, 
which cause artifacts in the resulting bound estimates. The $L(\bstheta; H^1)$ objective places comparatively greater emphasis on reducing these high-frequency error components. 
 
As with the original Fisher divergence objective, \eqref{eq:fisher_latent_weighted} cannot be directly used to learn the score because it depends on 
the \textit{a priori} unknown quantity $\nabla_{\bsz} \log \pi_{\mathrm{ltnt}} (\bsz)$. 
However, the same 
arguments used to derive the SM 
and SSM objectives can be extended to obtain 
a weighted variant of the SSM objective from \eqref{eq:fisher_latent_weighted}. This is formalized by the following theorem. 

\begin{theorem}
Assume that the latent-space distribution $\pi_{\mathrm{ltnt}}$ and the score model $s_{\bstheta}$ satisfy Assumptions 1 and 3 in \cite{song_2020_sliced}. Then $L(\bstheta; H^1) = J_{\mathrm{sl}}(\bstheta; H^1) + C$, where 
$$
J_{\mathrm{sl}}(\bstheta; H^1) \triangleq \mathbb{E}_{\bsv} \, \mathbb{E}_{\bsz}  \left [ \frac{1}{2} \left \| s_{\bstheta}(\bsz) \right \|_\bsW^2 + \bsv^T \bsW \left [\nabla_{\bsz} s_{\bstheta}(\bsz) \right] \bsv \right]. 
$$
Further, if 
the score model family is well specified and identifiable, the parameter space is compact, and the score model and ground-truth score satisfy Lipschitz conditions (Assumptions 4, 6, and 7 in \cite{song_2020_sliced}), then the objective
\begin{equation}
\begin{split}
\hat{J}_{\mathrm{sl}}(\bstheta; H^1) \triangleq \frac{1}{N} \sum_{i=1}^N& \left ( \frac{1}{2} \left \| s_{\bstheta}(\bsz_i) \right \|_{\bsW}^2 \phantom{\sum_{j=1}^M} \right.\\ {} & \left. + \frac{1}{M} \sum_{j=1}^M \, \bsv_{ij}^T \bsW \left [\nabla_{\bsz} s_{\bstheta}(\bsz_i) \right] \bsv_{ij} \right),
\end{split}
\label{eq:ssm_empirical_weighted}
\end{equation}
yields a consistent estimator of the score. In particular,
$$
\hat{\bstheta} = \argmin_{\bstheta \in \bsTheta} \hat{J}_{\mathrm{sl}}(\bstheta; H^1) 
$$
converges in probability to $\bstheta^*$ as $N \to \infty$, where $\bstheta^*$ denotes 
the ground-truth parameter satisfying 
$s_{\textcolor{blue}{\bstheta^*}}(\bsz) 
= \nabla_{\bsz} \log \pi_{\mathrm{ltnt}}(\bsz)$. 
\end{theorem}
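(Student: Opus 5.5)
The plan is to reduce both claims to the corresponding results of Song et al.~\cite{song_2020_sliced} by a change of variables that absorbs the weight $\bsW$. The first step is to note that $\bsW = \mathbf{I} + \mathbf{L}$ is symmetric positive definite. Here $\mathbf{L}$ is taken as the (positive semi-definite) discretization of $-\Delta$ implemented via FFTs, so it is symmetric with nonnegative eigenvalues. Hence $\bsW$ has a symmetric square root $\bsW^{1/2}$ that is invertible and constant in $\bsz$.

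For the identity $L(\bstheta;H^1) = J_{\mathrm{sl}}(\bstheta;H^1) + C$, expand the weighted square:
\[
\tfrac12\|s_{\bstheta}-\nabla\log\pi_{\mathrm{ltnt}}\|_{\bsW}^2 = \tfrac12\|s_{\bstheta}\|_{\bsW}^2 - s_{\bstheta}^T\bsW\nabla\log\pi_{\mathrm{ltnt}} + \tfrac12\|\nabla\log\pi_{\mathrm{ltnt}}\|_{\bsW}^2 .
\]
The last term is independent of $\bstheta$. By Assumption 1 of \cite{song_2020_sliced}, it is finite because $\|\cdot\|_\bsW^2\le \lambda_{\max}(\bsW)\|\cdot\|_2^2$; it forms $C$. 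For the cross term, write $\mathbb{E}[s_{\bstheta}^T\bsW\nabla\log\pi_{\mathrm{ltnt}}] = \int (\bsW s_{\bstheta})^T\nabla \pi_{\mathrm{ltnt}}\,d\bsz$. We then integrate by parts componentwise, exactly as in Hyv\"arinen's derivation. The boundary terms vanish by Assumption 3, which requires $\pi_{\mathrm{ltnt}} s_{\bstheta}\to 0$ at infinity; this transfers to $\pi_{\mathrm{ltnt}}\bsW s_{\bstheta}$ since $\bsW$ is a fixed matrix. The result is $-\mathbb{E}[\mathrm{tr}(\bsW\nabla_{\bsz}s_{\bstheta})]$. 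Finally, apply Hutchinson's identity $\mathbb{E}_{\bsv}[\bsv^T\mathbf{A}\bsv]=\mathrm{tr}(\mathbf{A})$ with $\mathbf{A}=\bsW\nabla_{\bsz}s_{\bstheta}(\bsz)$ and $\mathbb{E}[\bsv\bsv^T]=\mathbf{I}$. Fubini swaps the expectations; this is justified by the integrability assumed in Assumption 1 together with $\mathbb{E}\|\bsv\|^2<\infty$. Together these steps give the stated form.

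For consistency, define the transformed model $\tilde s_{\bstheta}\triangleq\bsW^{1/2}s_{\bstheta}$ and target $\bsW^{1/2}\nabla\log\pi_{\mathrm{ltnt}}$. Then $\|s_{\bstheta}\|_\bsW^2=\|\tilde s_{\bstheta}\|_2^2$. Moreover, with $\tilde\bsv=\bsW^{1/2}\bsv$ we have $\bsv^T\bsW\nabla s_{\bstheta}\bsv=\tilde\bsv^{T}\,\nabla\tilde s_{\bstheta}\,\bsW^{-1/2}\tilde\bsv$. This is not literally of SSM form, so rather than reducing to Song et al.'s theorem verbatim, I would re-run their argument, which is a standard M-estimator consistency proof. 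The argument has three parts. First, a uniform law of large numbers: $\hat J_{\mathrm{sl}}(\cdot;H^1)\to J_{\mathrm{sl}}(\cdot;H^1)$ uniformly on the compact $\bsTheta$. This follows from Lipschitz continuity in $\bstheta$ (Assumption 7) with an integrable envelope. The weighting only multiplies the Lipschitz constants and envelopes by $\|\bsW\|_2$, since the integrand is a quadratic form in $s_{\bstheta}$ and a bilinear form in $\bsv$ and $\nabla s_{\bstheta}$. Second, identification: $J_{\mathrm{sl}}(\bstheta;H^1)-J_{\mathrm{sl}}(\bstheta^*;H^1)=L(\bstheta;H^1)$. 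Since $\bsW\succ0$, this vanishes iff $s_{\bstheta}=\nabla\log\pi_{\mathrm{ltnt}}$ a.e.; by well-specification and identifiability (Assumptions 4 and 6), this forces $\bstheta=\bstheta^*$, giving a unique well-separated minimizer by continuity and compactness. Third, the argmin theorem yields $\hat\bstheta\to\bstheta^*$ in probability.

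The main obstacle I expect is the uniform convergence step. One must check that Song et al.'s envelope and Lipschitz bounds for the random slicing term survive the insertion of $\bsW$. I would handle this by bounding $|\bsv^T\bsW(\nabla s_{\bstheta}-\nabla s_{\bstheta'})\bsv|\le\|\bsW\|_2\|\bsv\|_2^2\|\nabla s_{\bstheta}-\nabla s_{\bstheta'}\|_2$ and then invoking their Assumption 7 directly. Positive definiteness of $\bsW$ is the only structural fact needed beyond their setting.
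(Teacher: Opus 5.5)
Your proposal is correct and takes the route the paper indicates. The paper's proof only says that the argument for Theorem 2 of Song et al.\ carries over with minor modifications and omits the details. You supply those details: expanding the $\mathbf{W}$-weighted square, integrating by parts to obtain $\mathrm{tr}(\mathbf{W}\nabla_{\mathbf{z}} s_{\boldsymbol{\theta}})$, applying Hutchinson's identity, and then re-running the uniform-LLN, identification and argmin steps with constants scaled by $\|\mathbf{W}\|_2$. Your two structural observations are exactly the ``minor modifications'' the paper leaves implicit: that $\mathbf{L}$ must be read as a positive semidefinite discretization of $-\Delta$ so that $\mathbf{W}\succ 0$, and that the weighted slicing term is not literally of SSM form, so Song et al.'s argument must be re-run rather than cited verbatim.
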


\begin{proof}
The proof follows that of Theorem 2 in \cite{song_2020_sliced} with only minor modifications. The details are therefore omitted. 
\end{proof}

\subsection{Putting the Pieces Together}

We now describe how the proposed Bochner-space SSM objective is used to estimate the latent-space Bayesian CRB $\bsV_{\bsz}$ via 
 the decomposition of the Bayesian Fisher information matrix given in \eqref{eq:Jb_exact}.  

To estimate the prior term $\bsJ^P_{\bsz}$, we first train a score model using the Bochner-space SSM objective in \eqref{eq:ssm_empirical_weighted}. 
The latent-space samples $\bsz_i$ used to train the model are drawn 
from $\pi_{\mathrm{ltnt}}(\bsz)$ by 
encoding the training samples $\{ \bsm_i \}_{i=1}^N$ 
and then sampling from the corresponding latent Gaussian distributions. We implement 
the vector-valued score model $s_{\bstheta}(\bsz)$ as the gradient of a scalar-valued function $e_{\bstheta}(\bsz)$: 
$$
s_{\bstheta}(\bsz) = \nabla_{\bsz} e_{\bstheta}(\bsz). 
$$
This implementation ensures that the score model, like the ground-truth score, is a conservative vector field. For the scalar-valued function $e_{\bstheta}(\bsz)$, we use a least-squares-residual-based parameterization \cite{hertrich2025learning}: 
\begin{equation}
\label{eq:energy_net_lsr_param}
e_{\bstheta}(\bsz) = -\frac{1}{2} \left \|\eta_{\bstheta}(\bsz) - \bsz \right \|_2^2,
\end{equation}
where $\eta_{\bstheta}: 
\mathbb{R}^Z \to \mathbb{R}^Z$ is a neural network. 

Once the latent-space score has been learned, the $\bsJ^P_{\bsz}$ term can be estimated using Monte Carlo methods as
$$
\hat{\bsJ}^P_{\bsz} \triangleq \frac{1}{N}\sum_{i=1}^{N} s_{\hat{\bstheta}}(\bsz_i) \, s_{\hat{\bstheta}}(\bsz_i)^T,
$$
where $s_{\hat{\bstheta}}$ denotes 
the learned score model. 
For each $\bsm_i$, the latent sample $\bsz_i$ is drawn according to 
$\bsz_i \sim \mathcal{N}\left(
\phi^e(\bsm_i)_1, \mathrm{diag}(\phi^e(\bsm_i)_2^2) \, \right)$. 

To estimate the term $\bsJ_{\bsz}^D$, 
we again use a Monte Carlo estimator: 
\begin{equation}
\label{eq:Jd_estimate}
\hat{\bsJ}^D_{\bsz} \triangleq \frac{1}{N}\sum_{i=1}^{N} \bsg_i \, \bsg_i^T, \quad \bsg_i \triangleq \nabla_{\bsz} \log \pi_{\mathrm{dlike}}(\bsy \mid \bsz_i).
\end{equation}
The latent-space likelihood scores $\bsg_i$ can be computed in two ways, depending on the application. First, automatic differentiation can 
be used to differentiate 
through both the forward model $f$ and the decoder $\phi^d$ in the latent-space likelihood defined in 
\eqref{eq:latent_like}. Alternatively, 
the chain rule gives 
$$
\bsg_i = \left[\nabla_{\bsz} \phi^d(\bsz_i) \right]^T \nabla_{\bsm} \log \pi_{\mathrm{like}}(\bsy \mid \bsm_i), \quad \bsm_i = \phi^d(\bsz_i). 
$$
Under this second approach, the 
image-space likelihood score $\nabla_{\bsm} \log \pi_{\mathrm{like}}(\bsy \mid \bsm_i)$ can first 
be computed 
using analytic expressions, adjoint methods, or other gradient computation methods. 
The latent-space likelihood score $\bsg_i$ is then computed using reverse-mode automatic differentiation as the vector--Jacobian product 
$$
\bsg_i = \nabla_{\bsz} \left[\phi^d(\bsz_i)^T \, \nabla_{\bsm} \log \pi_{\mathrm{like}}(\bsy \mid \bsm_i) \right]. 
$$

After computing 
$\hat{\bsJ}^P_{\bsz}$ and $\hat{\bsJ}^D_{\bsz}$, an estimate of the Bayesian Fisher information matrix is 
formed as 
\begin{equation}
\hat{\bsJ}_{\bsz} = \hat{\bsJ}^P_{\bsz} + \hat{\bsJ}^{D}
_{\bsz}.
\label{eq:estimate_fisher}
\end{equation}
A pseudoinverse may be used to 
obtain a numerically stable 
estimate of the Bayesian CRB. 
However, in our experiments, 
$\hat{\bsJ}_{\bsz}$ was always invertible, so 
we set $\hat{\bsV}_{\bsz} = \hat{\bsJ}_{\bsz}^{-1}$. As mentioned above, this bound estimate can 
be used directly for experimental design or transformed into an estimate of 
the image-space bound $\bsV_\bsm$ using \eqref{eq:image_space_cov_estimate}. 

\section{Numerical Study}
\label{sec:num_stud}

To validate the proposed approach, we conducted a numerical study using a stylized inverse problem inspired by quantitative photoacoustic computed tomography (qPACT) 
\cite{cox2009challenges, tarvainen2024quantitative}. 
In the following subsections, we provide details of 
the problem setting and 
the study implementation. 

\subsection{Problem Setting}

Quantitative PACT is a multi-physics imaging modality that aims to reconstruct images of 
the underlying molecular or optical properties of the tissue being imaged \cite{wang2012biomedical, xu2006photoacoustic}. The contrast mechanism in qPACT is based on the photoacoustic effect, in which 
absorbed electromagnetic energy is converted into acoustic waves. In qPACT, a sequence 
of short laser pulses illuminates the tissue, and ultrasonic transducers surrounding the tissue detect the resulting acoustic waves. 
These measurements are then used to reconstruct the image of interest. 

In this work, we considered the estimation of the optical absorption coefficient $\mu_a
$ in a stylized, multi-illumination qPACT 
experiment in two spatial dimensions. 
For 
training, 
we used two-dimensional absorption coefficient maps $\bsmu_a \in \mathbb{R}^{D_1 \times D_2}$ derived from the stochastic three-dimensional numerical breast phantoms introduced in \cite{park2023stochastic, park2026virtual, chen2025benchmarking}. Specifically, 
$2{,}684$ three-dimensional numerical breast phantoms were used to generate $40{,}000$ two-dimensional absorption coefficient maps. 
Each map was obtained by randomly selecting a breast phantom, extracting a random $2 \, \mathrm{mm}$-thick coronal slice from the selected phantom, assigning stochastic functional and optical properties as in 
\cite{park2023stochastic}, and averaging the $\mu_a$ values across the slice thickness. 
The resulting images were 
of size $1024 \times 1024$, leading to an inverse problem with more than 
one million unknown parameters. Uncurated samples from this dataset are displayed in the first row of Fig. \ref{fig:recon_results}. 

For the forward model, 
we assumed that the underlying tissue was 
acoustically homogeneous and lossless, allowing 
acoustic wave propagation to 
be modeled using 
the circular Radon transform \cite{poudel2019survey}. We also assumed that the photoacoustic efficiency of the tissue was 
known and adopted 
a simplified 
optical model.
These assumptions led 
to the following linear-Gaussian likelihood: 
\begin{equation}
\pi_{\mathrm{like}} (\bsy \mid \bsm) \triangleq \mathcal{N}(\bsy; \bsd(\bsm), \tau^2 \mathbf{I}),
\end{equation}
where $\bsm = \mathrm{vec} (\bsmu_a) \in \mathbb{R}^D$, $\tau = 0.01$, and $\bsd: 
\mathbb{R}^D \to \mathbb{R}^K$ is the linear(ized) forward model, defined as
\begin{equation}
\label{eq:qpact_like}
\bsd(\bsm) \triangleq \mathrm{vec}(\{\mathcal{R}( \mathbf{D}_i \, \bsm), i = 1, \dots, I \}).
\end{equation}
Here, $\mathcal{R}$ denotes 
the two-dimensional circular Radon transform, and $\bsD_i \in \mathbb{R}^{D \times D}$ is a diagonal weighting matrix representing 
the fluence distribution induced by the $i$th illumination. We considered 
the ``contiguous'' and ``interlaced'' illumination 
schemes introduced in \cite{crafts2024bayesian}, each of which used $I=4$ illuminations. For each scheme, the 
diagonal entries of the matrices $\{ \bsD_i \}_{i=1}^{I}$, 
corresponding to the fluence maps, were computed 
by solving a diffusion partial differential equation (PDE) that models 
light propagation under each 
illumination \cite{wang2012biomedical}. 
The mean of the training samples $\{ \bsm_i \}_{i=1}^N$ 
was used as the absorption coefficient in these 
simulations, and the diffusion equation 
was solved using the finite element 
method \cite{AlnaesEtal2015}. 
Further details are provided in \cite{crafts2024bayesian}. 
Figure~\ref{fig:fluence_figure} shows 
the diagonals of the $\bsD_i$ matrices for both illumination schemes. 

\begin{figure}
    \centering
    \includegraphics[width=\linewidth]{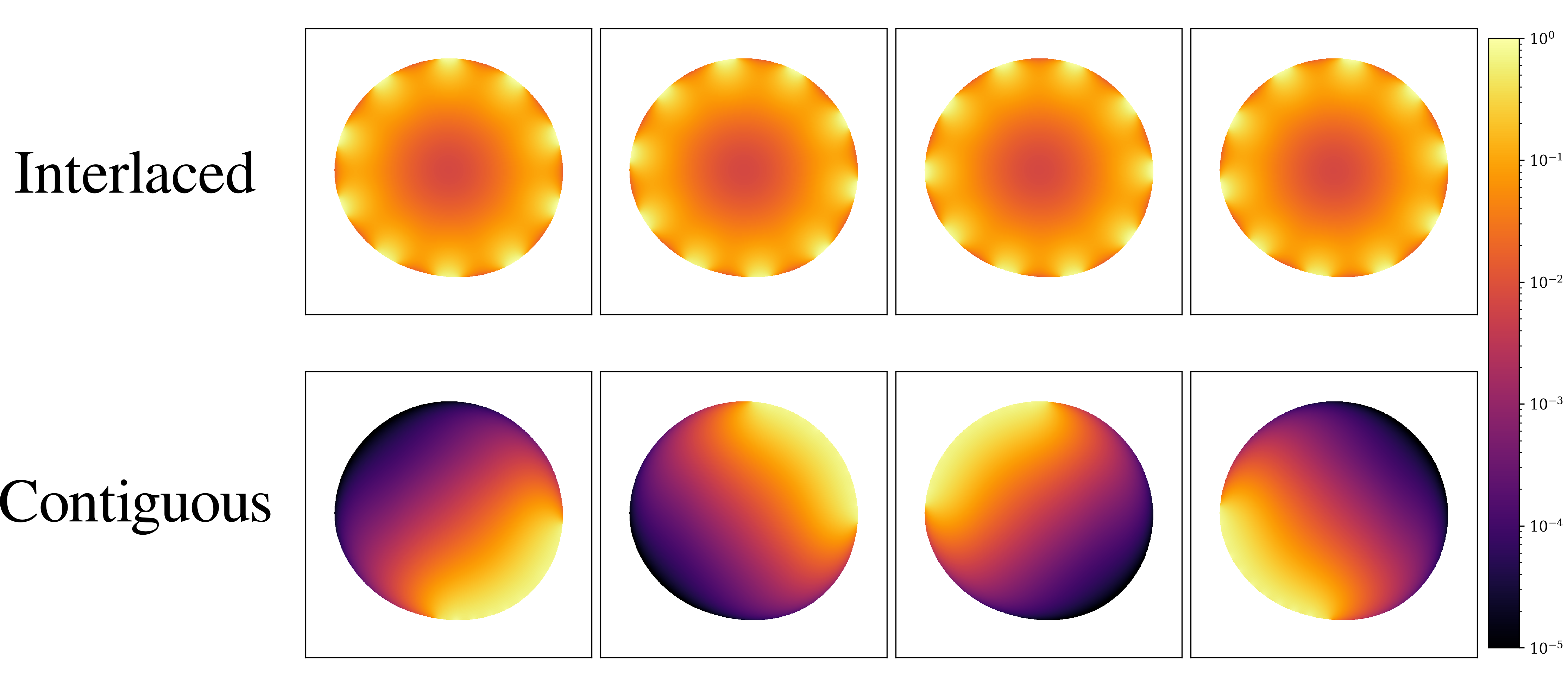}
    \caption{
    Fluence maps simulated under 
    the interlaced (top) and contiguous (bottom) 
    illumination schemes used in the numerical studies. These maps correspond to 
    the diagonals of the weighting matrices $\{\bsD_i\}_{i=1}^{I}$ in \eqref{eq:qpact_like}, with $I=4$. 
    The white regions indicate locations that contained only water across the dataset, 
    i.e., regions outside of the breast phantoms; 
    the fluence was set to zero in these regions. 
    Although the two illumination schemes produce different spatial fluence patterns, 
    both exhibit 
    significant light attenuation near 
    the center of the domain due to the highly scattering nature of biological tissue. }
    \label{fig:fluence_figure}
\end{figure}


\subsection{Implementation of Proposed Approach}

The proposed approach used 
a pre-trained autoencoder to facilitate bound computation. In our experiments, we used 
the variational autoencoder from 
Stable Diffusion v1, a latent diffusion modeling framework \cite{rombach2022highresolution}. In particular, we used the model architecture designed for RGB images of size $3 \times 512 \times 512$, with a latent space of size 
$4 \times 64 \times 64$. To accommodate the single-channel $1024 \times 1024$ data, 
we 
modified the autoencoder by incorporating a fixed down-sampling and broadcasting layer at the start of the encoder and a fixed up-sampling and averaging layer at the end of the decoder. 
Because the 
values of $\mu_a$ 
span several orders of magnitude, which can be difficult for the autoencoder to handle, we prepended 
a pointwise logarithmic transformation to the encoder 
and shifted and scaled the transformed 
data 
into $[-1, 1]^D$. The inverse of this transformation was 
appended to the decoder. Notably, no fine-tuning was performed. Figure \ref{fig:recon_results} shows uncurated $\bsmu_a$ maps 
from our dataset 
together with their corresponding autoencoder reconstructions. 
The reconstructions exhibit high visual fidelity without noticeable 
distortions, and 
the average relative squared reconstruction error 
over the entire dataset was $7.67\%$. 

\begin{figure}
    \centering
    \includegraphics[width=\linewidth]{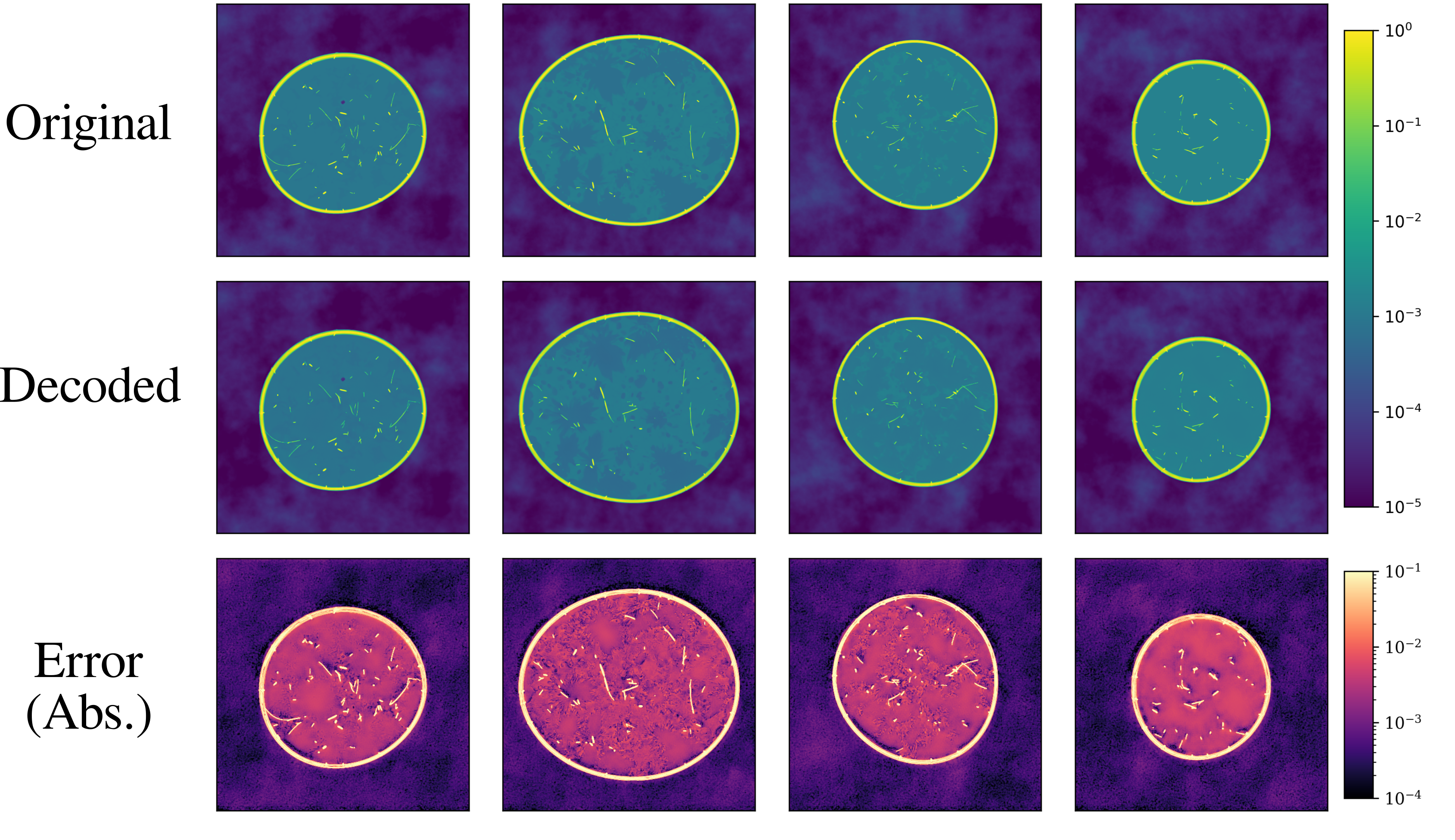}
    \caption{Uncurated $\bsmu_a$ maps from our dataset (top), 
    the corresponding reconstructions obtained using the autoencoder (middle), and the associated 
    error maps (bottom). 
    Despite not being trained or fine-tuned on our dataset, the autoencoder produced accurate reconstructions without noticeable visual distortions.} 
    \label{fig:recon_results}
\end{figure}

We implemented both the original and proposed weighted SSM objective functions in the latent space using \eqref{eq:ssm_empirical_weighted} with $\mathbf{W}=\mathbf{I}$ and $\mathbf{W}=\mathbf{I}+\mathbf{L}$, respectively. The two-dimensional Laplacian operator $\mathbf{L}$ was implemented in Fourier space with symmetric mirror padding 
to prevent aliasing. 
We used the same network architecture and training procedure 
for both the original and weighted SSM objectives. The network $\eta_{\bstheta}$ in \eqref{eq:energy_net_lsr_param} was implemented as 
a four-layer convolutional neural network with 
Softplus activations 
and a kernel size of $3$. The three hidden layers contained 
$16$, $32$, and $16$ channels, respectively, resulting in a 
total of 
$10{,}437$ model parameters. The SSM objective in \eqref{eq:ssm_empirical_weighted} was implemented with $M = 10$. 
We optimized the network weights using the AdamW optimizer \cite{loshchilov2017decoupled} with a weight decay of $10^{-4}$ and a learning rate of $2 \times 10^{-4}$. Gradient clipping and exponential moving averages of the network weights were incorporated for stability. The 
networks trained 
with the original and weighted SSM objectives 
were each optimized 
for $450{,}000$ 
iterations. 

\begin{figure}
    \centering
    \includegraphics[width=\linewidth]{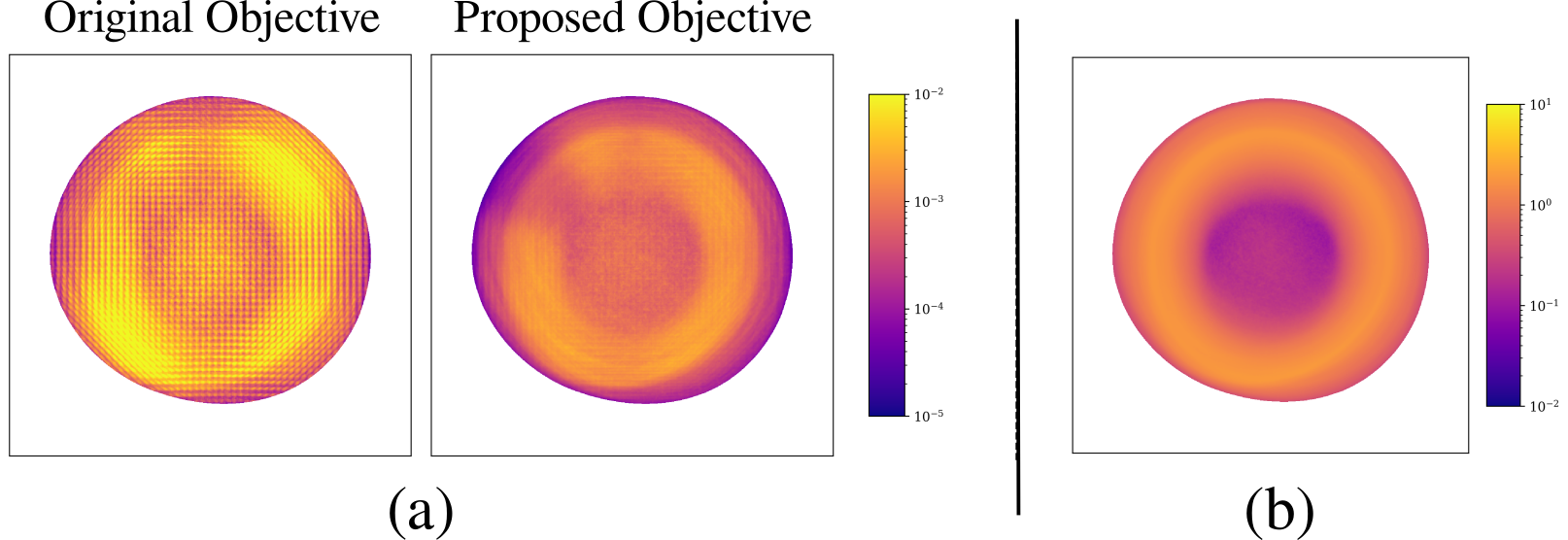}
    \caption{(a) Diagonals of the estimates of $\bsV^P_{\bsm} $ in \eqref{eq:jp_pushforward} obtained using 
    the original (left) and weighted (right) SSM objectives. (b) 
    Pointwise variance of the samples $\{ \bsm_i \}_{i=1}^N$. 
    In both panels, 
    regions containing only 
    water 
    across the dataset were masked out. 
    Both 
    $\bsV_{\bsm}^P$ estimates exhibit 
    similar global structure, 
    consistent with that of the pointwise variance. However, the estimate obtained using the original SSM objective contains 
    high-frequency artifacts 
    that are absent from the estimate obtained 
    using the proposed objective.  }
    \label{fig:jp_estimates}
\end{figure}

Other implementation details are as follows. We computed the latent-space likelihood gradients in \eqref{eq:Jd_estimate} using reverse-mode automatic differentiation through both the likelihood $\pi_{\mathrm{like}}$ and the decoder. 
We estimated the matrix in the change-of-variables formula 
in \eqref{eq:cov_formula} 
using $N_s = 4{,}000$ samples and 
computed the Jacobians $\nabla_{\bsz} \phi^d(\bsz)$ using forward-mode automatic differentiation.

\section{Results}
\label{sec:results}

\begin{figure}
    \centering
    \includegraphics[width=\linewidth]{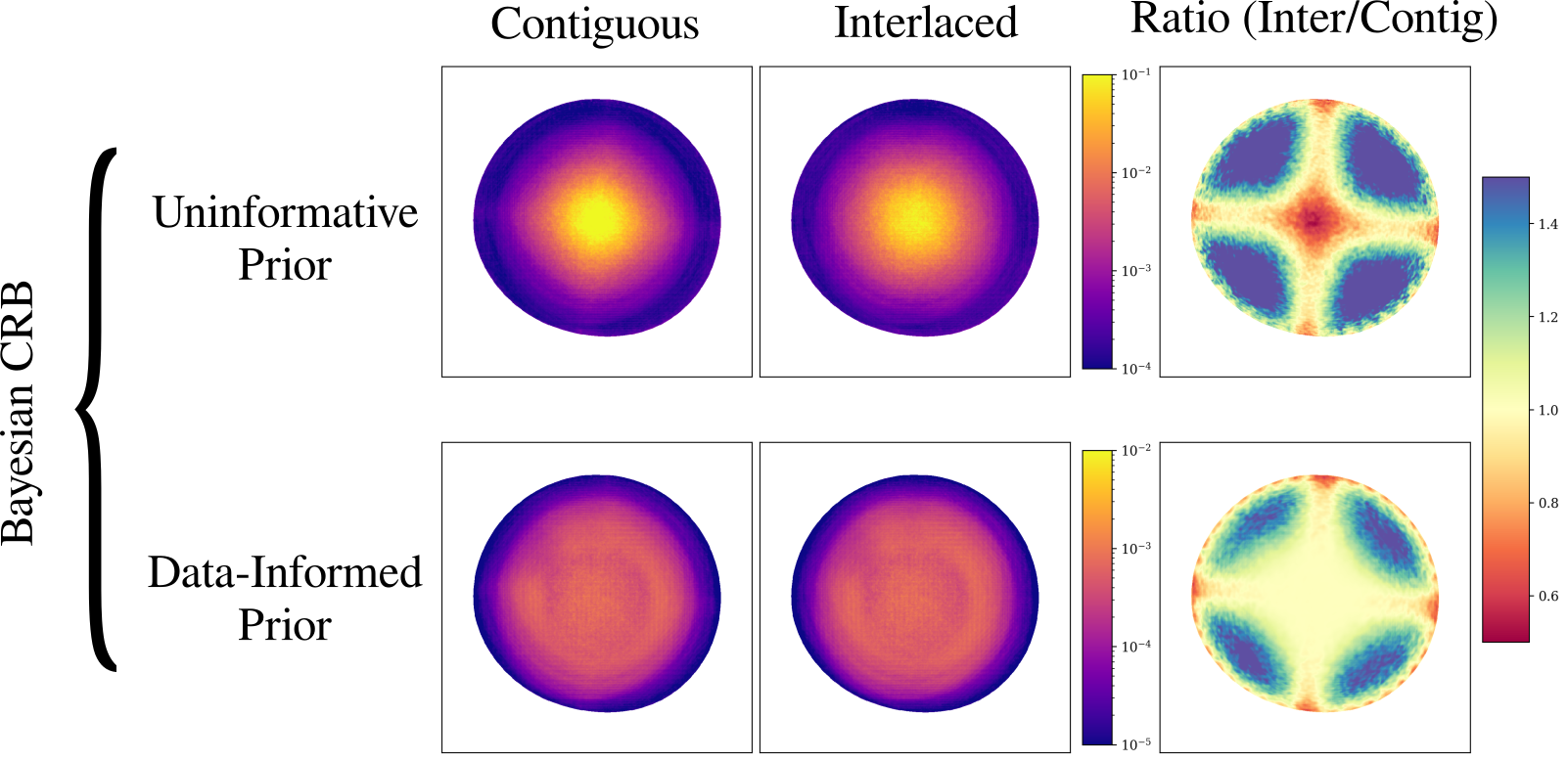}
    \vspace{-5mm}
    \caption{Estimates of the diagonals of the Bayesian CRB $\bsV_{\bsm}$ (bottom) 
    and the Bayesian CRB under an uninformative prior $\bsV_{\bsm}^D$ (top; 
    see 
    \eqref{eq:jd_pushforward}) for the contiguous and interlaced illumination 
    schemes. The final column shows the pointwise ratio of the two bounds, with red indicating locations 
    where the interlaced scheme performs better and 
    blue indicating locations where the contiguous scheme is superior. 
    The spatial structure of the bound under the uninformative 
    prior is consistent with that of 
    the diagonal weighting matrices shown in Fig. \ref{fig:fluence_figure}. 
    Incorporating the data-informed prior 
     changes both the spatial structure of the bounds and their relative strengths because 
     the prior strongly informs the parameters near 
     the center of the domain; 
     cf. Fig. \ref{fig:jp_estimates}(b).}
     
    \label{fig:jd_bound_estimates}
\end{figure}

\begin{table*}[tb]
    \caption{
    Trace and log-determinant 
    of the estimated 
    Bayesian CRBs 
    (lower is better). 
    With the uninformative prior, the ranking 
    of the two illumination 
    schemes depends on the chosen metric: 
    the latent-space bound favors 
    the contiguous 
    scheme, whereas 
    the parameter-space bound favors 
    the interlaced 
    scheme. The incorporation of the learned prior resolves this discrepancy, 
    with the contiguous 
    scheme favored by all reported 
    metrics. }
    \centering
    \begin{tabular}{c|ccc | ccc }
    \rowcolor{my_gray} Design Scheme & \multicolumn{3}{c|}{Uninformative Prior} & \multicolumn{3}{c}{Data-Informed Prior} \\
   \rowcolor{my_gray} & Trace $\bsV_{\bsz}^D$ & Log Det. $\bsV_{\bsz}^D$ & Trace $\bsV_{\bsm}^D$      & Trace $\bsV_{\bsz}$ & Log Det. $\bsV_{\bsz}$ & Trace $\bsV_{\bsm}$  \\
        
    \hline
    Interlaced    & $217819.2410$ & $-33163.6517$ &  $\mathbf{3065.9900}$ & $11.2323$ & $-131923.8601$ & $147.4793$ \\
    Contiguous    & $\mathbf{167677.041}$ & $\mathbf{-35779.3460}$  &  $3844.3887$ & $\mathbf{10.9382}$ & $\mathbf{-132472.7493}$ & $\mathbf{134.2040}$ 
    \end{tabular}
    \label{tab:metric_results}
\end{table*}

To isolate the effect 
of the learned prior on the Bayesian CRB, we first examine the estimates of $\bsJ^P_{\bsz}$ 
obtained using both the original and proposed SSM objectives. These are mapped to image space using the change-of-variables formula in \eqref{eq:image_space_cov_estimate}. 
Specifically, we computed estimates of  
\begin{equation}
\label{eq:jp_pushforward}
\bsV^P_{\bsm} \triangleq \bsC \, \bsV^{P}_{\bsz} \, \bsC^T, \quad \bsV^P_{\bsz} \triangleq \left( \bsJ_{\bsz}^P \right)^{-1}.
\end{equation}
Figure \ref{fig:jp_estimates} shows the estimates of $\bsV^P_{\bsm} $ 
obtained using 
the original and proposed SSM objectives, together with the pointwise variance of the samples in the dataset $\{ \bsm_i \}_{i=1}^{N}$. Three observations are noteworthy. 
First, both 
bound estimates exhibited 
spatial structure similar to that of the sample pointwise variance, indicating 
that the bound estimates captured 
features present in the training distribution. 
Second, both bound estimates were 
several orders of magnitude smaller than the pointwise sample variance. This is 
expected because the pointwise variance of the samples provides an upper bound on the diagonal of $\bsV^P_{\bsm}$ 
that is attained only when the $\bsm$ distribution is Gaussian (see, e.g., \cite{bell_2013_detection}), while the distribution of the training data is highly non-Gaussian. Finally, although 
the estimates obtained with the two different objectives exhibited 
similar global structure, the estimate obtained using the original SSM objective contained 
high-frequency artifacts that were absent from 
the estimate obtained using the proposed objective. This observation is 
consistent with the stronger penalty 
that the proposed objective places on 
high-frequency components of the error in the learned score. 

As with the prior term, we  
examined the impact of the data term $\bsJ^D_{\bsz}$ by computing  
estimates of
\begin{equation}
\label{eq:jd_pushforward}
\bsV_{\bsm}^D \triangleq \bsC \, \bsV_{\bsz}^D \, \bsC^T, \quad \bsV_{\bsz}^D \triangleq \left ( \bsJ_{\bsz}^D \right)^{-1}. 
\end{equation}
Here, $\bsV_{\bsz}^D$ 
is the inverse of the expected Fisher information associated with 
the likelihood 
and corresponds to the latent-space Bayesian CRB under  
an uninformative prior. The matrix $\bsV_{\bsm}^D$ is the corresponding image-space bound obtained using the change-of-variables formula. Figure \ref{fig:jd_bound_estimates} shows the diagonals of both the estimated uninformative-prior Bayesian CRB $\bsV_{\bsm}^D$  
and the learned-prior Bayesian CRB $\bsV_{\bsm}$. 
For the latter, the prior score was learned using the proposed SSM objective.
As can be seen, the spatial structure of the bounds under the uninformative prior was 
consistent with that of 
the fluence maps in Fig. \ref{fig:fluence_figure}. In particular, both illumination 
schemes led 
to large bound values 
near the center of the domain, where signal attenuation was 
strongest. The 
relative performance 
of the two schemes, as shown in the final 
column of Fig.~\ref{fig:jd_bound_estimates}, was 
consistent with 
the contiguous design concentrating 
its illuminations in four distinct regions. Incorporation of the prior term $\bsJ_{\bsz}^P$ altered 
both the spatial structure of the bounds and the relative strengths of the two designs. In particular, the Bayesian CRB under the data-informed prior was dominated by the prior term near 
the center of the domain 
because 
signal attenuation was severe and the training samples exhibited 
relatively low variance in this region. 

Table \ref{tab:metric_results} reports 
the trace and log-determinant 
of the estimates of the latent-space bounds $\bsV_{\bsz}^D$ and $\bsV_{\bsz}$, together with the trace of 
their counterparts in the original parameter space. Note that the trace and log-determinant correspond to the A-optimal and D-optimal design criteria, respectively, and are commonly used metrics in experimental design \cite{pukelsheim_1993_optimal}. The results are consistent with 
the findings in Fig. \ref{fig:jd_bound_estimates} and further demonstrate that the learned prior substantially affects 
the structure of the bound. In particular, the incorporation of the data-informed prior changed 
the ranking 
of the two design schemes when 
the trace of the bound in parameter space was used as the design criterion. 
This finding suggests that the prior can play an important role in data-informed experimental design. 

\section{Discussion and Conclusion}
\label{sec:discuss_conclude}

In this work, we introduced a new framework to facilitate 
data-driven Bayesian CRB estimation for high-dimensional imaging systems. The framework uses a pre-trained variational autoencoder to reformulate the bound estimation problem in the autoencoder latent space, which dramatically decreases the dimensionality of the bound estimation problem. A novel SM 
variant, 
SSM 
in Bochner spaces, was 
also introduced to facilitate learning the score of the latent-space prior distribution. By formulating the SSM objective within a Bochner space that accounts for the spatial regularity of the field, we effectively penalized discrepancies in the spatial gradients of the score. The results demonstrated that this approach suppressed 
high-frequency artifacts, leading to significantly more stable downstream Bayesian CRB estimates compared with the original SSM 
objective. Furthermore, the data-informed prior was 
shown to play a significant role in the structure of the resulting bound, emphasizing the importance of accurately 
capturing the prior distribution when using the bound for downstream tasks such as optimal experimental design 
and estimator benchmarking. 

This work represents a promising step toward making the Bayesian CRB a practical tool for the design and analysis of high-dimensional imaging systems. However, several 
limitations remain 
to be addressed to realize this potential. First, the proposed approach was evaluated 
using a highly 
stylized model of the underlying imaging physics. In particular, the nonlinear PDE-based models typically used to model 
light propagation in qPACT were sidestepped, leading to a 
simple linear-Gaussian likelihood function. We also used a two-dimensional model of the imaging system, whereas most 
qPACT systems used in practice are inherently three-dimensional. Moreover, we neglected 
the 
effects of unknown tissue parameters that are not of direct clinical interest, 
such as optical scattering, speed-of-sound heterogeneity, and acoustic attenuation. These parameters are known 
to play a significant role in qPACT image formation \cite{cox2009challenges}. New techniques will likely be required to address the additional computational challenges associated with these complications. Second, rigorous comparisons of the bound estimates obtained from 
the proposed framework with the error maps of estimators for 
the corresponding inverse problem are needed. Examining the relationship 
between the bound and the error maps would help assess 
the validity of the bound estimate as a meaningful estimator benchmark and as a guide for experimental design. 

The Bochner-space SSM objective introduced here also warrants further investigation as a stand-alone technique for score learning. In 
this work, we avoided directly formulating the Bochner-space SSM objective in the infinite-dimensional function space setting that motivated its development. 
However, there has been substantial recent work on the development of infinite-dimensional SM 
approaches in the context of denoising diffusion models \cite{lim2023score, pidstrigach2023infinite}. A formal comparison between Bochner-space SSM and Sobolev-norm denoising SM 
could provide key insights into the trade-offs between computational efficiency and the preservation of function-space structure.

\end{document}